\newtheorem{theorem}{Theorem}
\newtheorem{assumption}{Assumption}
\algnewcommand\algorithmicforeach{\textbf{foreach}}
\theoremstyle{definition}
\newtheorem{definition}{Definition}
\def\BibTeX{{\rm B\kern-.05em{\sc i\kern-.025em b}\kern-.08em
    T\kern-.1667em\lower.7ex\hbox{E}\kern-.125emX}}
\begin{document}

\title{A One-Shot Reparameterization Method for Reducing the Loss of Tile Pruning on DNNs \\
}

\author{\IEEEauthorblockN{Yanchen Li$^1$, Qingzhong Ai$^2$, and Fumihiko Ino$^1$}
\IEEEauthorblockA{1: Osaka University, 1-5 Yamadaoka, Suita, Osaka 565-0871, Japan, \{yc-li, ino\}@ist.osaka-u.ac.jp}
\IEEEauthorblockA{2: University of Electronic Science and Technology of China, Chengdu, China, qzai@std.uestc.edu.cn}
}

\maketitle

\begin{abstract}

Recently, tile pruning has been widely studied to accelerate the inference of deep neural networks (DNNs).
However, we found that the loss due to tile pruning, which can eliminate important elements together with unimportant elements, is large on trained DNNs.
In this study, we propose a one-shot reparameterization method, called \textit{TileTrans}, to reduce the loss of tile pruning.
Specifically, we repermute the rows or columns of the weight matrix such that the model architecture can be kept unchanged after reparameterization.
This repermutation realizes the reparameterization of the DNN model without any retraining.
The proposed reparameterization method combines important elements into the same tile; thus, preserving the important elements after the tile pruning.
Furthermore, TileTrans can be seamlessly integrated into existing tile pruning methods because it is a pre-processing method executed before pruning, which is orthogonal to most existing methods.
The experimental results demonstrate that our method is essential in reducing the loss of tile pruning on DNNs.
Specifically, the accuracy is improved by up to 17\% for AlexNet while 5\% for ResNet-34, where both models are pre-trained on ImageNet.

\end{abstract}

\begin{IEEEkeywords}
Deep learning, tile pruning, reparameterization
\end{IEEEkeywords}

\section{Introduction}
\label{sec:intro}

Deep neural networks (DNNs) have demonstrated excellent performance on a wide variety of tasks, such as computer vision~\cite{xiong2021explore, liu2021swin}, speech recognition~\cite{popov2021grad, min2021meta}, and robotics~\cite{chebotar2021actionable, sunderhauf2018limits}. One backbone behind the growth in the abilities of DNNs is an increasing number of trainable parameters. For instance, a recent state-of-the-art model, CogView~\cite{ding2021cogview}, achieved superior performance on text-to-image generation tasks, thanks to its almost four billion parameters. However, training a model with many parameters is time- and resource-intensive. Therefore, the need for accelerating the computation on DNN is also increasing. Nowadays, several mainstream acceleration methods, such as pruning~\cite{han2015learning, wang2021accelerate, jordao2020discriminative}, mixed precision~\cite{micikevicius2017mixed}, and parallelization~\cite{dean2012large} have recently attracted significant attention. Among these methods, mixed precision and parallelization focus on the acceleration in the training phase of DNNs, whereas pruning concentrates on the inference time.

This study focuses on pruning for the increasing need to accelerate the application of DNNs. Pruning accelerates DNNs by deleting the trivial elements in the weight matrix. The importance of each element is evaluated by the \textit{importance score}~\cite{molchanov2019importance} so that the elements with small importance scores will be deleted. Furthermore, to quantify the result of pruning, we calculate the loss of pruning based on the importance score of the deleted elements, generally summation. As the goal of pruning, we hope to get a minor pruning loss, which brings higher inference accuracy, with a certain sparsity of pruning. 

However, there is a tradeoff between pruning loss and acceleration. On the one hand, pruning loss increases when pruning sparsity increases. On the other hand, acceleration from pruning is proportional to the sparsity. Consequently, various pruning methods~\cite{peng2021accelerating, ren2020darb, narang2017block, chen2021re} managed to better balance the tradeoff between the pruning loss and acceleration.

Among those pruning methods, tile pruning~\cite{guo2020accelerating} achieves a better tradeoff. Tile pruning deletes the elements in the weight matrix by tiles while other structured pruning methods eliminate the elements by a larger shape, such as a row. The advantages of deleting elements by tiles can be reflected in the following two aspects. First, owing to the smaller pruning unit of tiles, the pruning loss is relatively smaller than other structured pruning methods. Second, the computation for pruned tiles can be skipped with a small overhead, and thus the pruned DNNs are accelerated. However, we found that the trained DNNs empirically do not always fit well with tile pruning. Those important elements are usually eliminated with unimportant elements, leading to large pruning loss.

To this end, we propose a reparameterization method, called \textit{TileTrans}, based on the DNNs model for tile pruning. Instead of changing the pruning strategy, TileTrans reparameterizes the DNNs model before the pruning. Specifically, we use permutation to separate elements by their importance so that there are either important or unimportant elements in the same tile. In this way, we preserve more important elements after tile pruning and achieve minor pruning loss. TileTrans is user-friendly in the sense that we reparameterize the DNN model by a trick of matrix multiplication instead of retraining. Besides, we keep the model architecture unchanged and therefore introduce no extra computation for inference. Additionally, our algorithm is a straightforward heuristic algorithm, which means that the algorithm's efficiency is guaranteed. 

We evaluated TileTrans on two classical DNN models: AlexNet~\cite{krizhevsky2012imagenet} and ResNet-34~\cite{he2016deep}, where both models are first pretrained on the ImageNet dataset~\cite{deng2009imagenet}. We reparameterized the DNN models by TileTrans before tile pruning on the models. The experimental results show that TileTrans reduces the pruning loss and improves accuracy by 17\% on AlexNet and 5\% on ResNet-34, respectively.

The main contributions of our study are as follows:
\begin{itemize}
    \item We proposed a reparameterization method, named TileTrans, to permute the weight matrix such that important and unimportant elements are permuted in the same tile before tile pruning, respectively. As a result, we preserve more important elements after tile pruning, and thus we reduce the pruning loss.
    \item We reparameterized DNN models in a novel way based on a trick of matrix multiplication. Compared with other reparameterization methods, our method is user-friendly because TileTrans requires no extra retraining on DNN models. Furthermore, We kept the architecture of the model unchanged so that no additional computation occurs after TileTrans. 
    \item We proposed a heuristic method to build an appropriate permutation for TileTrans to avoid time-consuming searching for the optimal permutation. Besides, we verified that this heuristic method does reduce the loss of tile pruning experimentally.
\end{itemize}
Eventually, we deduce that TileTrans is capable of building DNN models that are more adaptable to tile pruning.

\section{Related work}
\label{sec:related}

Notably, TileTrans can be fused with any tile pruning method seamlessly.
Various pruning strategies~\cite{lin2019towards, guo2020accelerating, peng2021accelerating} have been proposed to reduce the loss of tile pruning. 
For example, Guo et al.~\cite{guo2020accelerating} deleted the tiles by the number of important elements in the tile. 
TileTrans reparameterizes DNN before the procedure of pruning; thus, it can be used together with these pruning strategies to improve the performance of tile pruning.

Compared with the major reparameterization methods on DNNs, TileTrans needs no extra training. 
Reparameterization has been used to change the data structure of DNN for many purposes~\cite{ding2019acnet, ding2021diverse, ding2021repmlp, ding2021repvgg}.
Specifically, reparameterization is also used to build a model with structural sparsity.
Most reparameterization methods design a particular loss function and train the model for a large number of iterations. 
For example, Wen et al.~\cite{wen2016learning} introduced the Lasso regression to the loss function and trained a model with structural sparsity;
Ding et al.\cite{ding2021resrep} proposed the ResRep, which punished unimportant channels in the weight matrices during training.
These studies achieved outstanding performance in building structural sparsity, but they required extra training on the models.
In contrast, our TileTrans only reparameterizes DNN by matrix multiplication without any training.

Some previous works also reduced the pruning loss by transforming the weight matrix.
However, these methods changed the model architectures; thus, increasing the cost of computation.
Liu et al.~\cite{liu2018frequency} added a transformation to each layer, by which they transformed the format of each weight matrix into the frequency domain. 
Guo et al.~\cite{guo2020accelerating} duplicated each weight matrix and saved the important elements in one of them.
These methods improved the accuracy of the pruned models but called for extra computation compared with the original architecture. 
In contrast, TileTrans transforms weight matrices without modifying the architecture.
In other words, we add no extra computation to the model.

\section{Preliminary}
\label{sec:pruning}

Before introducing TileTrans, we have to present the preliminary about tile pruning because we specialize TileTrans for reducing the loss of tile pruning.
Therefore, we will introduce the optimization objective and algorithm of tile pruning as follows.

\subsection{Importance Score}
\label{sec:importance_score}

The importance score is a criterion used to evaluate the importance of an element in the weight matrix. Given an element with weight $w$, we calculate the importance score as $\mathcal{E}(w)$, where $\mathcal{E}: \mathbb{R} \to \mathbb{R}$ denotes the criterion of importance score. There are many criteria in previous works, such as calculating the Hessian matrix~\cite{lecun1990optimal} and deriving from Taylor expansion~\cite{molchanov2019importance}.

We can quantitatively define the loss of pruning with a certain importance score. Generally, the pruning loss is an important score summation of the deleted elements. Given the weights of the model as $\mathbf{W}$ and targeted sparsity $s$, weights are pruned by  $\mathit{f}(\mathbf{W}, s ; \mathcal{E})$, where $f$ deletes the elements in $\mathbf{W}$ to the sparsity of $s$ depending on $\mathcal{E}$. Furthermore, pruning can be described as an optimization problem with the objective as follows:
\begin{equation}\label{eq:pruning_optimazation}
\begin{aligned}
     \min_{f} \quad & \mathcal{L}(\mathbf{W}, s; \mathcal{E}) = \left\vert \sum_w^{ w \in \mathbf{W} } \mathcal{E}(w) - \sum_w^{w \in \mathit{f}(\mathbf{W}, s; \mathcal{E})} \mathcal{E}(w) \right\vert \\
     \textrm{s.t.} \quad & \mathbf{W} \in \mathbb{W}, 0 \leq s \leq 1, 
\end{aligned}
\end{equation}
where $\mathcal{L}$ denotes the loss function of pruning and $\mathbb{W}$ is the set of all possible weights. In other words, the loss of pruning can be minimized by deleting the elements with the smallest importance scores. In tile pruning, there are certain restrictions on the shape of the pruning unit. Therefore, we evaluate the average importance score of the elements in the tiles for tile pruning and delete the most unimportant tiles. 

\subsection{Tile Shape}

The shape of tile pruning depends on the devices or libraries. 
For example, Chen et al.~\cite{chen2021re} pruned the weights in the shape of the loading unit for the tensor core so that the computation for pruned elements can be skipped with a small overhead.
Guo et al.~\cite{guo2020accelerating} used a shape of $128 \times 1$, because CUTLASS~\cite{nvidia2017cutlass} loads $128 \times 1$ elements at once from the global memory to the shared memory on the GPU system.
Given the tile shape of $a \times b$, tile pruning sometimes is called the ``block pruning''~\cite{peng2021accelerating} when $a=b$. 
Note that we regard the block pruning as tile pruning in this paper.
Especially, tile pruning in the shape of $1 \times 1$ is called unstructured pruning~\cite{liu2018rethinking}.
The pruning loss of unstructured pruning is usually lower than that of tile pruning, where $a > 1$ and $b > 1$. 
Unstructured pruning precisely deletes the unimportant elements while tile pruning eliminates the important elements together with the unimportant elements in the same tile. 
Therefore, unstructured pruning is usually used as the baseline for tile pruning when evaluating the accuracy of pruned models.

\begin{figure}[t]
    \centering
    \begin{subfigure}{.33\linewidth}
      \centering
      \includegraphics[width=\linewidth]{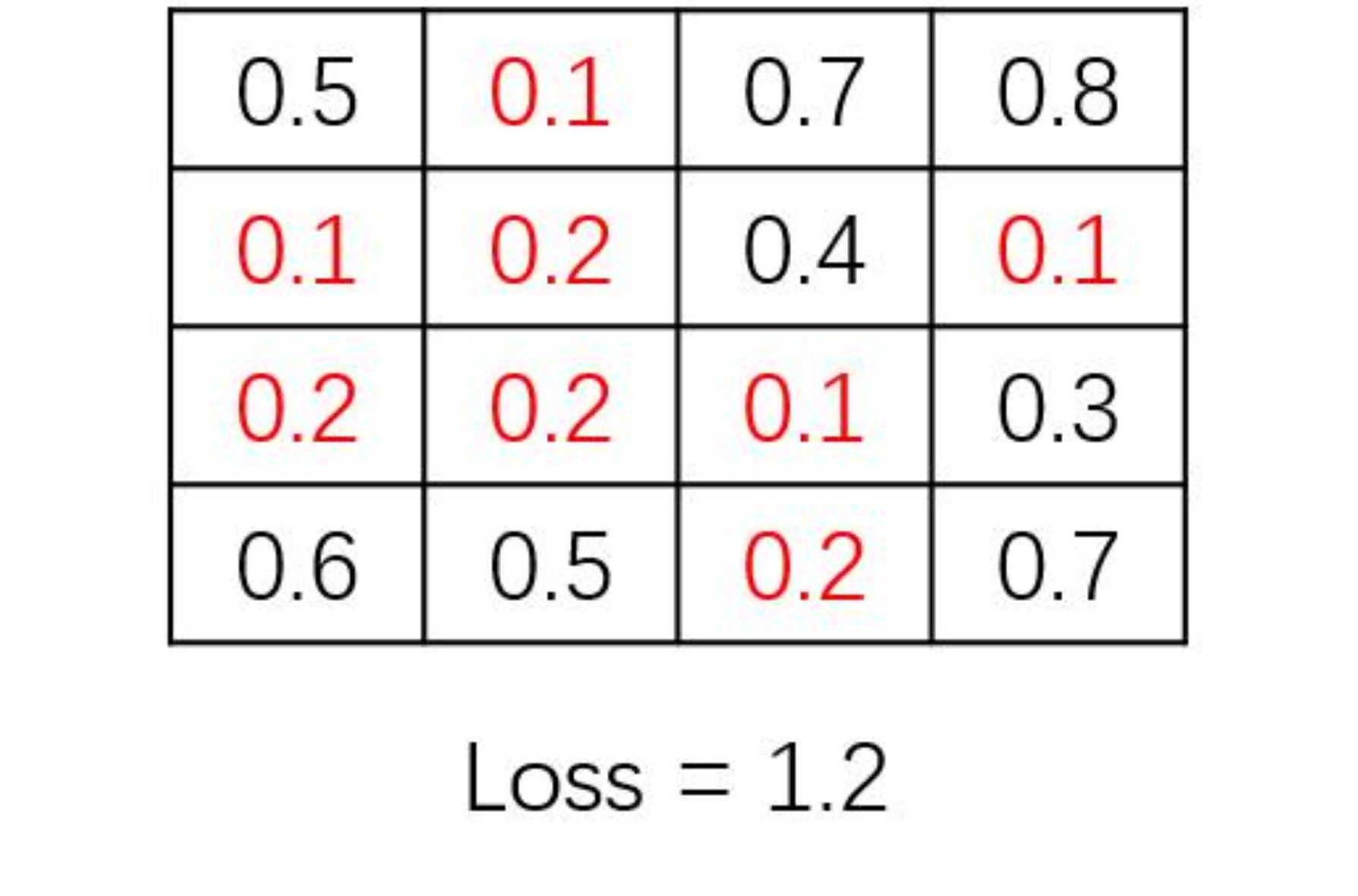}
      \caption{}
      \label{fig:sample_recon_baseline}
    \end{subfigure}%
    \begin{subfigure}{.33\linewidth}
      \centering
      \includegraphics[width=\linewidth]{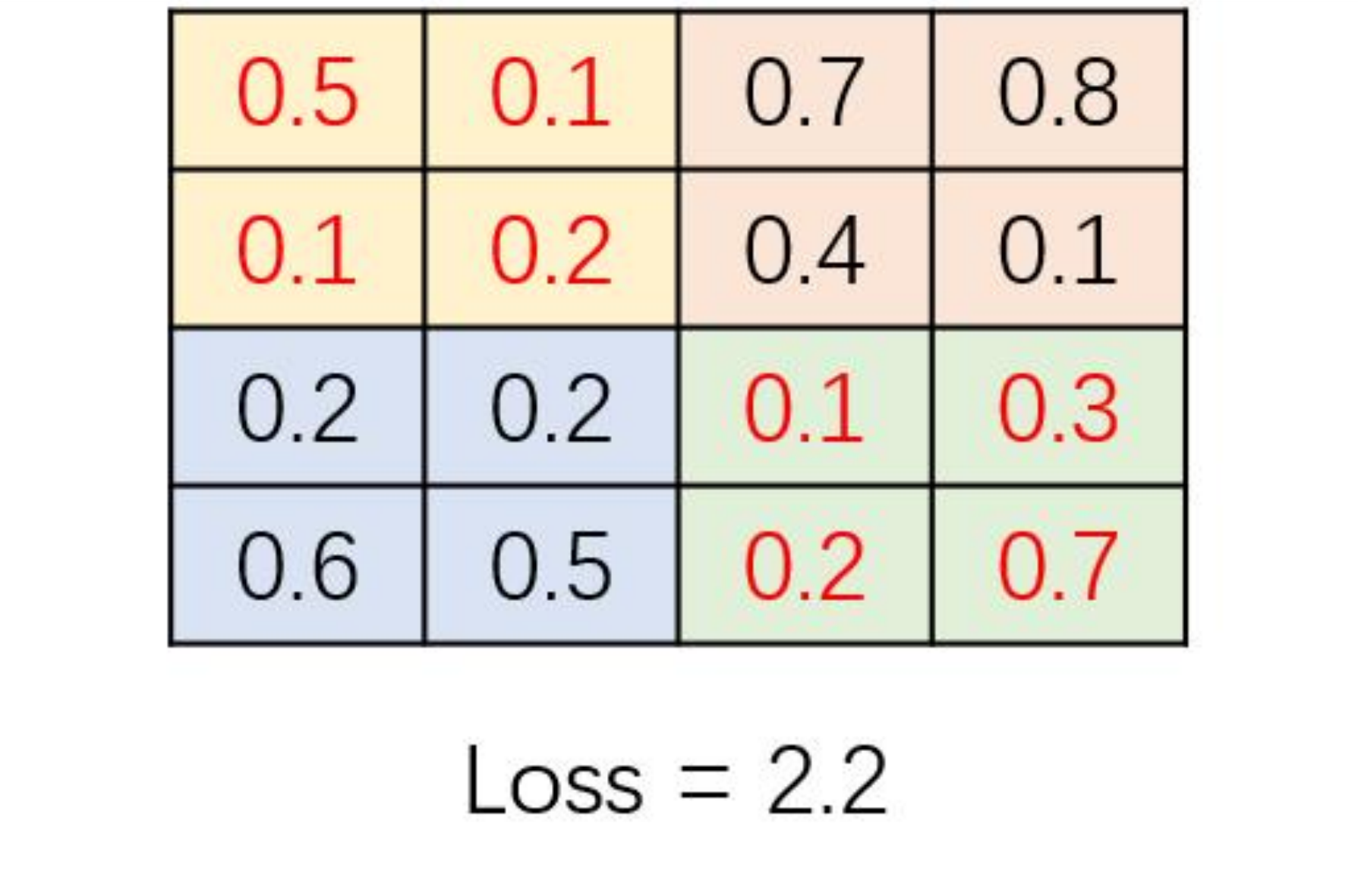}
      \caption{}
      \label{fig:sample_recon_before}
    \end{subfigure}%
    \begin{subfigure}{.33\linewidth}
      \centering
      \includegraphics[width=.9\linewidth]{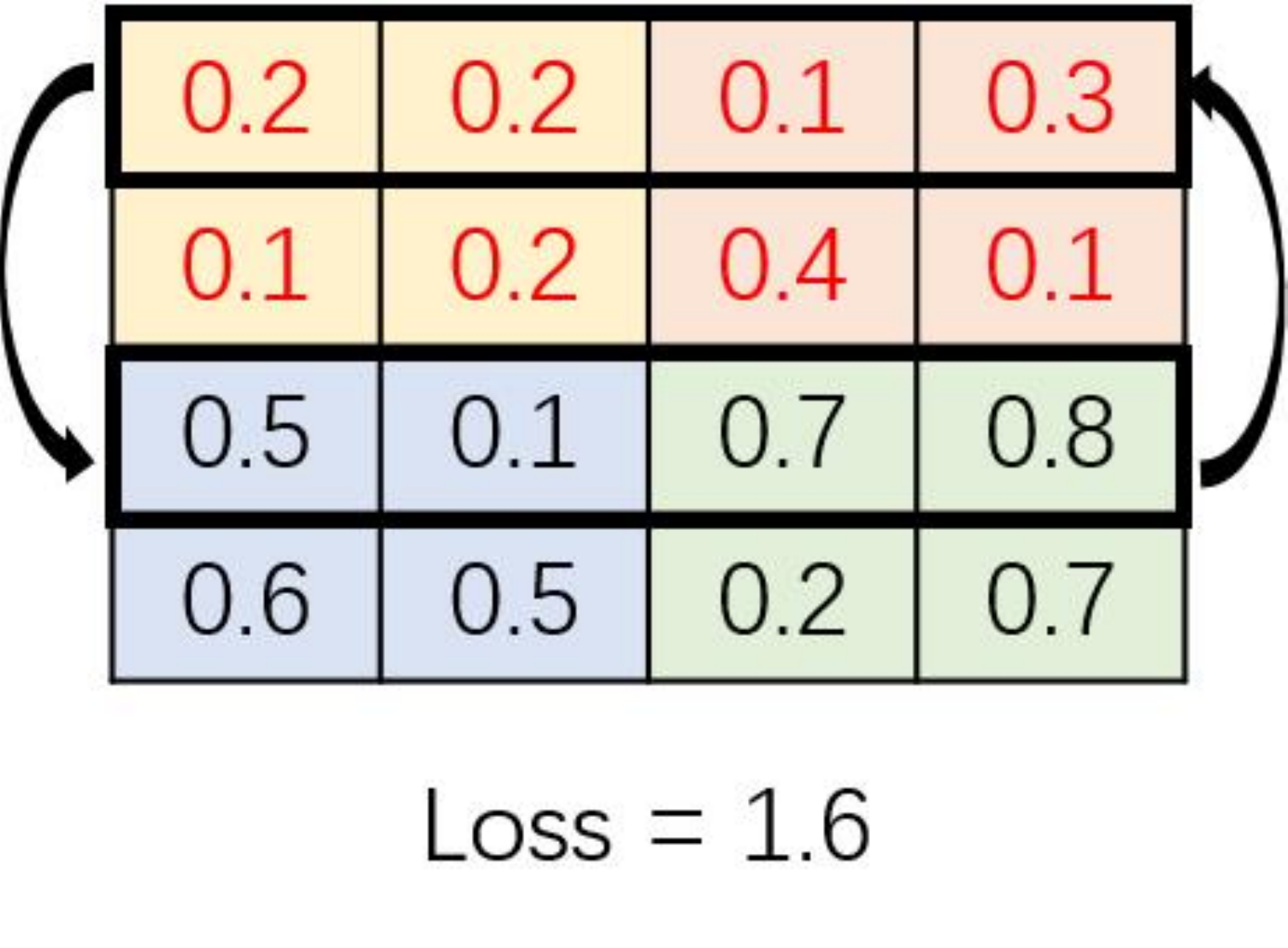}
      \caption{}
      \label{fig:sample_recon_after}
    \end{subfigure}
    \caption{Examples for pruning the $4 \times 4$ weight matrix to the sparsity of 0.5 with the unstructured pruning (a), the $2\times 2$ tile pruning before reparameterization (b), and the $2\times 2$ tile pruning after reparameterization (c), respectively. We fill the background with different colors for the corresponding tiles in (b) and (c) for easy observation. The numbers in cells are the importance scores for elements. The values of pruning loss are shown at the bottom of the figures. }
    \label{fig:sample_recon}
\end{figure}

\subsection{Algorithm of Tile Pruning}

For tile pruning, we evaluate the importance of every tile and delete unimportant tiles according to the importance score, which is calculated from the elements in the tile.
We show details in Algorithm~\ref{alg:tile_pruning}.
From lines 2 to 6, we traverse all weight matrices in the model and calculate the importance scores for every tile.
The importance score of a tile is defined as the average of the importance scores of the elements in that tile.
After evaluating for tiles, we set a threshold according to the sparsity at line 7.
Because the sparsity is the percentage of zero elements in all matrices, we delete unimportant tiles whose importance scores are smaller than the threshold.
Finally, we retrain the model to recover the accuracy for inference after tile pruning.

\begin{algorithm}
    \caption{ TilePruning($\mathbf{W}$, $\mathcal{E}$, $a$, $b$, $s$) }\label{alg:tile_pruning}
    \begin{algorithmic}[1]
        \Require Weight matrices of the model $\mathbf{W}$, metrics for importance score $\mathcal{E}$, height of tile $a$, width of tile $b$, and targeted sparsity $s$.
        \Ensure Pruned weight matrices $\mathbf{W}$.
        
        \State $\mathbf{V} \gets \emptyset$
        \ForEach{$W \in \mathbf{W}$ } \Comment{ Traverse the weights in the layers }
            \ForEach{$T \in W$}
                \State $\mathbf{V} \gets \mathbf{V} \cup \{ v~|~v = \sum_{i}^{a}\sum_{j}^{b}\mathcal{E}(T_{i,j})/ab  \} $
            \EndFor
        \EndFor
        \State Set threshold $t$ such that $|\{ v~|~v \in \mathbf{V}, v < t \}| / |\mathbf{V}| = s$
        \State Delete the tile with importance score smaller than $t$.
    \end{algorithmic}
\end{algorithm}

\section{Weights Transformation}
\label{sec:recon}

Naturally, the distribution of the elements in the weight matrices is usually unfriendly to tile pruning.
The important and unimportant elements are mixed in the same tile.
We show the examples in Figs.~\ref{fig:sample_recon_baseline} and \ref{fig:sample_recon_before}. 
For unstructured pruning in  Fig.~\ref{fig:sample_recon_baseline}, unimportant elements are precisely deleted, and thus the loss for unstructured pruning is the smallest.
For tile pruning in Fig.~\ref{fig:sample_recon_before}, the elements that are alive in unstructured pruning are deleted.
Consequently, the loss for tile pruning is more significant than for unstructured pruning.
To separate the important and unimportant elements, we propose TileTrans that reparameterizes DNN before the pruning by matrix transformation. 

\subsection{Idea of Permutation}
\label{sec:recon_idea}

The loss for tile pruning can be reduced by matrix transformation.
We transform the weight matrices to a better format, where the important elements are combined together in the same tiles as far as possible. 
For example, in Fig.~\ref{fig:sample_recon_after}, we switch the first row of the matrix with the third row.
Accordingly, more important elements are permuted in the same tiles than in Fig.~\ref{fig:sample_recon_before}.
Finally, the loss for the $2 \times 2$ tile pruning on the reparameterized weight matrix becomes smaller than that on the original weight matrix.

The critical point is that the transformation must keep the model's output unchanged.
The output of DNN is changed if we transform the weight matrix.
One option to recover the output is adding extra calculation after the inference.
However, we should avoid increasing the computational amount for inference; thus, we keep the model architecture unchanged.
Without modifying the architecture, we managed to recover the model's output by a trick of matrix multiplication. 

\subsection{ Algorithm of TileTrans}
\label{sec:recon_principle}
We designed a reparameterization method based on matrix multiplication. 
The main components of the DNN models are the linear layer~\cite{liu2017survey} and the convolutional layer~\cite{albawi2017understanding}, which can be represented as matrix multiplication.
Furthermore, the DNN model consists of multiple layers, where the calculations are transitive.
For example, Fig.~\ref{fig:partial_resnet} shows a partial architecture of ResNet, where the output of a layer is the input of another layer.
Utilizing this transitivity of the calculations in DNN, we can reparameterize DNN without any training by the properties of matrix multiplication.

\begin{figure}
    \centering
    \includegraphics[width=0.7\linewidth]{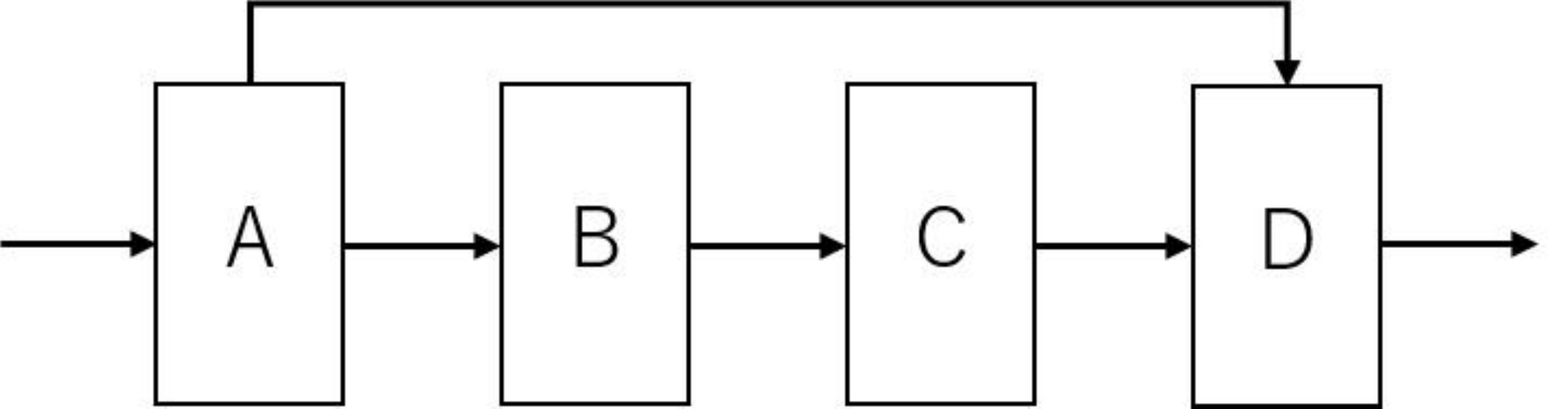}
    \caption{Partial architecture of ResNet. A block denotes a layer with weights. An arrow represents a data dependence inherent in the data flow.}
    \label{fig:partial_resnet}
\end{figure}

The reparameterization procedure is a series of operations that transform two layers.
For a single operation, we first build an invertible transformation $R$ for the previous layer and build its inverse $R^{-1}$ for the next layer.
In this way, the output of the transformed layer can be recovered by calculating the next layer. 
We take Fig.~\ref{fig:partial_resnet} as an example to show how this trick works.
Suppose that the weights of the layer $i$ is $W_i$, the input is $I_i$, and the output is $O_i$. The calculations in layer $A$ and $B$ then can be given as
\begin{equation}\label{eq:original_two_layers_0}
\begin{split}
    O_A = I_A W_A^T,
\end{split}
\end{equation}
\begin{equation}\label{eq:original_two_layers_1}
    O_B = O_A W_B^T.
\end{equation}
After that, we transform these weights by a matrix transformation $R$ as follows
\begin{equation}\label{eq:recon_two_layers_1}
\begin{split}
    O_A R^T = I_A (R W_A)^T,
\end{split}
\end{equation}
\begin{equation}\label{eq:recon_two_layers_2}
    O_B = O_A R^T (W_B R^{-1})^T.  
\end{equation}
In other words, we transform the previous layer and use the next layer to recover the output.
Therefore, the output of the two layers is unchanged, even though we transform their weights. 
Finally, we reparameterize the entire DNN model based on this trick.
However, the restraint for the reparameterization becomes more strict when the model architecture is more complex.

Some layers in the model have to share the same matrix transformation when there are residuals in the model. 
Residual~\cite{he2016deep} is a calculation that reuses the output of the previous layer as the input of the deeper layer. 
For example, in Fig.~\ref{fig:partial_resnet}, the output of layer A is the input of layers B and~D.
Before calculating the matrix multiplication of layer D, we first calculate the sum of $O_A$ and $O_C$ as follows:
\begin{equation}\label{eq:origin_partial_resnet_0}
    O_A = I_A W_A^T, 
\end{equation}
\begin{equation}\label{eq:origin_partial_resnet_1}
    O_B = O_A W_B^T, 
\end{equation}
\begin{equation}\label{eq:origin_partial_resnet_2}
    O_C = O_B W_C^T,
\end{equation}
\begin{equation}\label{eq:origin_partial_resnet_3}
    O_D = (O_A + O_C) W_D^T.
\end{equation}
Supposing the transformation for layer $i$ as $R_i$, the calculations for the transformed layers are given as
\begin{equation}\label{eq:recon_partial_resne_0}
    O_A R_A^T = I_A (R_A W_A)^T, 
\end{equation}
\begin{equation}\label{eq:recon_partial_resnet_1}
    O_B R_B^T = O_A R_A^T (R_B W_A R_A^{-1})^T, 
\end{equation}
\begin{equation}\label{eq:recon_partial_resnet_2}
    O_C R_C^T = O_B R_B^T (R_C W_C R_B^{-1})^T ,
\end{equation}
\begin{equation}\label{eq:recon_partial_resnet_3}
    O_D = (O_A R_A^T + O_C R_C^T) ( W_D R_C^{-1} )^T,
\end{equation}
where $O_D = (O_A + O_C) W_D^T$ only if $R_A = R_C$.
Thus, layers A and C must share the same transformation.
The example of Fig.~\ref{fig:partial_resnet} is simple, but the actual model is usually more sophisticated.

There may be groups of layers that must share the same transformation. For simplicity, we define the layers that have to share the same transformation as the \textit{layer group}. 
This layer group is built according to the relations of parents and children, which can be defined as follows.
\begin{definition}[Parents and Children]
Given a layer A, parents of A are the layers that send their output to A as the inputs of A. 
Meanwhile, children of A are the layers that receive the output of A as their inputs.
\end{definition}
\noindent
Specifically, we present a theorem as follows.
\begin{theorem}\label{theorm:layer_group}
Given a layer A in the DNN model, the parents of A must share the same transformation.
\end{theorem}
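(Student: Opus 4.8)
The plan is to lift the residual computation in Eqs.~\eqref{eq:origin_partial_resnet_3} and~\eqref{eq:recon_partial_resnet_3}, where we already saw that a single layer reading $O_A + O_C$ forces $R_A = R_C$, to an arbitrary number of parents. Let the parents of $A$ be $P_1,\dots,P_k$, so that before reparameterization the input of $A$ is the residual sum $I_A = \sum_{j=1}^{k} O_{P_j}$ and $O_A = I_A W_A^T$. Recall that a reparameterization assigns to every layer $\ell$ an invertible matrix $R_\ell$, transforms the output of $\ell$ into $O_\ell R_\ell^T$, and replaces $W_\ell$ by $R_\ell W_\ell R^{-1}$, where the right factor $R^{-1}$ is determined by the transformation $R$ applied to the layer that feeds $\ell$.

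First I would compute the input of $A$ after reparameterization, namely $\tilde I_A = \sum_{j=1}^{k} O_{P_j} R_{P_j}^T$, and then impose the recoverability requirement: the transformed layer $A$ must still emit $O_A R_A^T$ for some choice of its right factor $R^{-1}$. Writing this out gives $\left(\sum_{j=1}^{k} O_{P_j} R_{P_j}^T\right)(R^{-1})^T W_A^T R_A^T = \left(\sum_{j=1}^{k} O_{P_j}\right) W_A^T R_A^T$, and cancelling the common invertible factor $W_A^T R_A^T$ on the right reduces the requirement to $\sum_{j=1}^{k} O_{P_j} R_{P_j}^T (R^{-1})^T = \sum_{j=1}^{k} O_{P_j}$. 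Since the reparameterized model must compute the same function for every network input, this identity must hold when the upstream inputs are varied so that the tuple $(O_{P_1},\dots,O_{P_k})$ ranges over a spanning set; matching the terms then yields $R_{P_j}^T (R^{-1})^T = I$, i.e.\ $R_{P_j} = R$, for every $j$, so all parents of $A$ carry the same transformation.

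The step I expect to be the main obstacle is exactly the passage from ``the identity $\sum_j O_{P_j} R_{P_j}^T (R^{-1})^T = \sum_j O_{P_j}$ holds along the data flow'' to ``$R_{P_j} = R$ for all $j$'', because one must rule out the possibility that the parent outputs are so correlated that the identity holds by accident. The clean fix is to observe that output-preservation is required for \emph{all} inputs, so it suffices either to exhibit inputs making the $O_{P_j}$ linearly independent (which holds for generic weights and activations) or, as a fallback, to reduce to the two-parent situation already settled in Eq.~\eqref{eq:recon_partial_resnet_3} by driving all but two of the parent contributions to zero. Once this is granted, the rest is the routine cancellation above; the transitive closure of ``must share a transformation'' over chains of parents then explains the layer groups mentioned immediately after the theorem.
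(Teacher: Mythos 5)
Your proof takes essentially the same route as the paper's: the paper likewise generalizes Eq.~(\ref{eq:recon_partial_resnet_3}) to $n$ parents, writes $O = (O_1R_1^T+\cdots+O_nR_n^T)(WR^{-1})^T$, and asserts that the original output $(O_1+\cdots+O_n)W^T$ is recovered only if $R_1=\cdots=R_n=R$. You are in fact somewhat more careful than the paper, which simply states the ``only if'' without addressing the possibility that correlated parent outputs could make the identity hold accidentally --- the concern you raise and resolve by varying the inputs.
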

\begin{proof}
\noindent
According to Eq.~(\ref{eq:recon_partial_resnet_3}), assuming the weight is $W$, the transformation is $R$, and the output is $O$, if there are $n$ parents for a layer, the output of the layer is given by
\begin{equation}\label{eq:proof_layer_group_0}
    O = (O_1 R_1^T + O_2 R_2^T + ... + O_n R_n^T) (W R^{-1})^T,
\end{equation}
where $O_i$ denotes the output of the $i$-th parent, and $R_i$ denotes the transformation of the $i$-th parent. Then
\begin{equation}\label{eq:proof_layer_group_1}
    O = (O_1 + O_2 + ... + O_n) W^T,
\end{equation}
only if $R_1 = R_2 = ... = R_n = R$.
\end{proof}

\begin{algorithm}
\caption{ TileTrans($\mathbf{M}$, $\mathcal{E}$) }\label{alg:tile_recon}
\begin{algorithmic}[1]
\Require DNN model $\mathbf{M}$ and metrics for importance score $\mathcal{E}$.
\Ensure Reparameterized DNN model $\mathbf{M}$.

\State $\mathbb{G} \gets \emptyset$ \Comment{Initialize the layer group}
\ForEach{$L \in \mathbf{M}$ }
    \State $\mathbf{G} \gets $ the parents of $L$
    \State $\mathbb{G} \gets \mathbb{G} \cup \mathbf{G}$
\EndFor
\State $\hat{\mathbb{G}} \gets \{\mathbf{G}~|~ \mathbf{G} \gets \mathbf{G}_1 \cup \mathbf{G}_2 \text{ if } \mathbf{G}_1 \cap \mathbf{G}_2 \neq \emptyset, \text{ and } \mathbf{G}_1,\mathbf{G}_2 \in \mathbb{G} \}$ \Comment{Merge the intersecting groups}
\ForEach{$\mathbf{G} \in \hat{\mathbb{G}}$ }
    \State $R \gets$ BuildTrans($\mathbf{G}, \mathcal{E}$) \Comment{Build the transformation}
    \ForEach{$L \in \mathbf{L}$ }
        \State Transform the weights of $L$ by $R$
    \EndFor
    \State Transform the weights in the children of the layers in $\mathbf{G}$ by $R^{-1}$
\EndFor

\end{algorithmic}
\end{algorithm}

According to Theorem~\ref{theorm:layer_group}, we design an algorithm for TileTrans,  which is shown in Algorithm~\ref{alg:tile_recon}.
From lines 2 to 4, we first find the sets of weights that must share the same transformation. 
Notably, some subsets in $\mathbb{G}$ may intersect each other.
For example, given two sets of layers $\mathbf{G}_0 \text{ and } \mathbf{G}_1$, $\mathbf{G_0} = \{L_A, L_B\}$ and $\mathbf{G_1} = \{L_B, L_C\}$, all $L_A$, $ L_B$, and $L_C$ must share the same transformation for their weights.
Therefore, we merge the subsets intersecting each other at line 6.
According to $\mathbf{G} \in \mathbb{G}$, we build the appropriate transformation $R$,  which is introduced later, for the weights at line 8.
To obtain the correct output, we transform the weights in the children of the layers in $\mathbf{G}$ by $R^{-1}$ at line 12.

The modern DNN models improve their performance by introducing nonlinear layers, such as ReLU~\cite{agarap2018deep}, pooling~\cite{albawi2017understanding}, and normalization~\cite{ioffe2015batch}.
For such nonlinear layers, we restraint the transformation $R$ to only the permutation of rows in Algorithm~\ref{alg:tile_recon}.
This restraint guarantees the correctness of the model because permutation is a linear transformation that only changes the location of the elements.
By contrast, a linear transformation that changes the scale of elements does not have this guarantee because it fails to recover the output of the model.

Additionally, the permutation of columns works in the TileTrans, but Algorithm~\ref{alg:tile_recon} needs to be ``reversed.''
In Algorithm~\ref{alg:tile_recon}, we define the permutation of rows $R$ for the parents at line 8 and permute the weight matrices at line 10. 
After that, we use the permutation of columns $R^{-1}$ to recover the output for the children at line 12.
Therefore, TileTrans in rows is a procedure from parents to children.
Meanwhile, TileTrans in columns is a procedure from children to parents.
Thus, for the permutation of columns, we build the transformation for children and then recover the output by the calculations in parents.

\begin{figure*}[t]
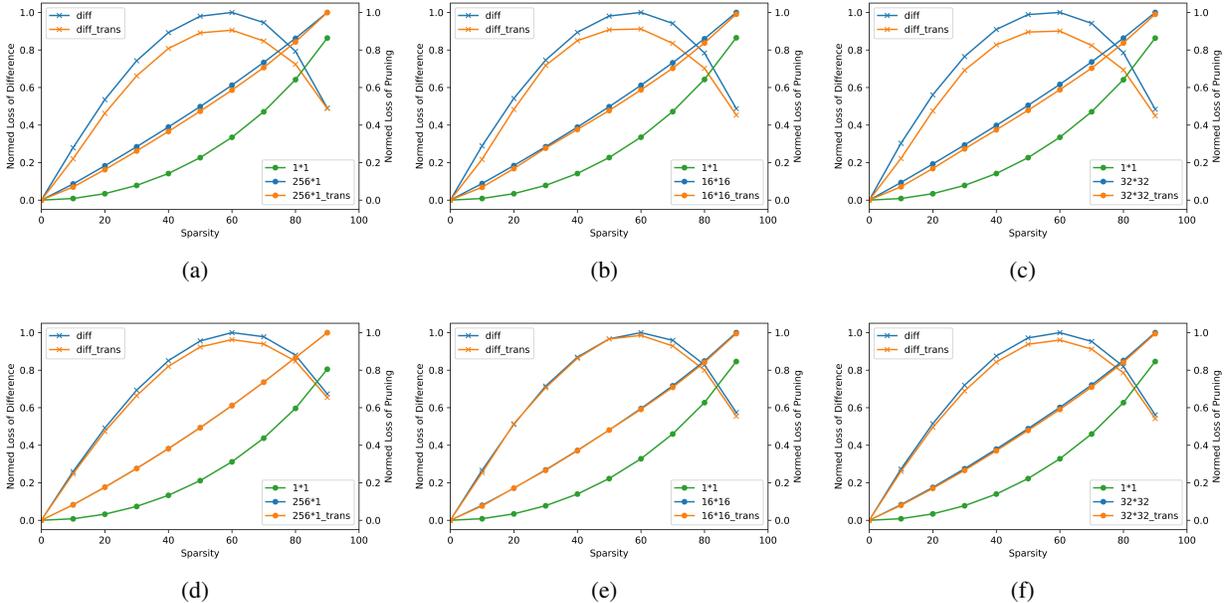

\centering
\begin{subfigure}{.3\textwidth}
  \centering
  \includegraphics[width=\linewidth]{fig/exp/alexnet_loss_256_1.pdf}
  \caption{}
  \label{fig:alex_loss_256_1}
\end{subfigure}%
\begin{subfigure}{.3\textwidth}
  \centering
  \includegraphics[width=\linewidth]{fig/exp/alexnet_loss_16_16.pdf}
  \caption{}
  \label{fig:alex_loss_16_16}
\end{subfigure}
\begin{subfigure}{.3\textwidth}
  \centering
  \includegraphics[width=\linewidth]{fig/exp/alexnet_loss_32_32.pdf}
  \caption{}
  \label{fig:alex_loss_32_32}
\end{subfigure}

\vskip\baselineskip
\vspace{-10pt}

\begin{subfigure}{.3\textwidth}
    \centering
    \includegraphics[width=\linewidth]{fig/exp/resnet_loss_256_1.pdf}
    \caption{}
    \label{fig:resnet_loss_256_1}
\end{subfigure}%
\begin{subfigure}{.3\textwidth}
    \centering
    \includegraphics[width=\linewidth]{fig/exp/resnet_loss_16_16.pdf}
    \caption{}
    \label{fig:resnet_loss_16_16}
\end{subfigure}
\begin{subfigure}{.3\textwidth}
    \centering
    \includegraphics[width=\linewidth]{fig/exp/resnet_loss_32_32.pdf}
    \caption{}
    \label{fig:resnet_loss_32_32}
\end{subfigure}
\caption{Loss of tile pruning on AlexNet (a, b, and c) and ResNet-34 (d, e, and f) that are pre-trained on the ImageNet for different sparsity. The tile shape in Figs.~(a) and (d) are  $256\times 1$, that in Figs.~(b) and (e) are $16\times 16$, and that in Figs.~(c) and (f) are $32 \times 32$. Specially, we define the result for unstructured tile pruning, i.e., the $1 \times 1$ tile pruning, as the baseline for tile pruning. Besides, the legend with ``trans'' denotes the result for the model reparameterized by TileTrans. Meanwhile, ``diff'' denotes the difference between tile pruning and unstructured pruning, which is defined as Eq.~(\ref{eq:recon_optimazation}). For simplicity, we normalize the pruning loss and difference by 0-1 normalization.}
\label{fig:model_loss}
\end{figure*}

\subsection{ Heuristic Permutation }
\label{sec:recon_selection}

Ideally, if all-important elements were aggregated in the same tile, all important elements then exist after tile pruning.
That is, the result of tile pruning becomes the same as that of unstructured pruning.
TileTrans aims to make the result of tile pruning as close as possible to that of unstructured pruning.
Because the loss for unstructured pruning is the smallest, we assume the loss for unstructured pruning is the baseline.
Based on this baseline, we give the optimization problem for the TileTrans as follows:
\begin{equation}\label{eq:recon_optimazation}
\begin{aligned}
    & \min \left\vert \sum_w^{w \in \mathit{f}_u(\mathbf{W}, s; \mathcal{E})} \mathcal{E}(w) - \sum_w^{w \in \mathit{f}_t(\mathcal{R}(\mathbf{W}), s; \mathcal{E})} \mathcal{E}(w) \right\vert,\\
    & \textrm{s.t.} \mathbf{W} \in \mathbb{W}, 0 \leq s \leq 1, 
\end{aligned}
\end{equation}
where $\mathit{f}_u: \mathbb{W} \to \mathbb{W}$ denotes the unstructured pruning method, $\mathit{f}_t: \mathbb{W} \to \mathbb{W}$ denotes the tile pruning method, and $\mathcal{R}:\mathbb{W} \to \mathbb{W} $ denotes the reparameterization method.
According to Eq.~(\ref{eq:recon_optimazation}), given specific sparsity $s$, we need to find the permutation that minimizes the difference between tile pruning and unstructured pruning methods.

However, searching for the optimal solution for Eq.~(\ref{eq:recon_optimazation}) is time consuming.
The most intuitive method to solve Eq.~(\ref{eq:recon_optimazation}) is to try all possible permutations and verify the effect, which costs too much time.
Given a model with $k$ layers and each layer with weights of $n$ rows, the time complexity to traverse all possible permutations is $O((n!)^k)$.
Therefore, we propose a heuristic method to build a relatively good permutation without searching.

\begin{algorithm}
\caption{ BuildTrans($\mathbf{G}$, $\mathcal{E}$) }\label{alg:sort_weights}
\begin{algorithmic}[1]
\Require Group of layers $\mathbf{G}$ and metrics for importance score $\mathcal{E}$.
\Ensure Permutation $R$.
\State $\mathbf{W} \gets$ all weights of the layers in $\mathbf{G} $
\State $\hat{W} \gets$  concatenate $ W \in \mathbf{W}$
\State $\hat{W}_R \gets \hat{W}$ 
\State Sort the rows in $\hat{W}_R$  \Comment{According to $\mathcal{E}$}
\State Build $R$ such that $ R \hat{W} = \hat{W}_R$
\end{algorithmic}
\end{algorithm}

We sort the rows according to their average importance scores, which are shown in Algorithm~\ref{alg:sort_weights}.
In line 1, we concatenate the weights that share the same transformation in the dimension of rows to evaluate them together. 
These weights can be concatenated because they have the same number of rows. 
After that, we sort the rows of the concatenated weights according to the average importance score, which is calculated by the metrics $\mathcal{E}$.
Finally, we build the transformation $R$ such that $R\hat{W} = \hat{W}_R$, where $\hat{W}$ denotes the concatenated weights, and $\hat{W}_R$ denotes the sorted concatenated weights.

The sorting procedure is based on two assumptions:
\begin{assumption}\label{assumption:normal_distribution}
The averages of the rows follow the normal distribution.
\end{assumption}
\begin{assumption}\label{assumption:variance}
The variance of importance scores for elements in a row is small.
\end{assumption}
\noindent
We make the assumptions depending on previous studies and our experience. 
Assumption~\ref{assumption:normal_distribution} is inspired by previous studies on channel pruning~\cite{luo2017thinet}, which prunes the elements in rows.
In channel pruning, the importance score for a row is evaluated by the average of the importance scores for the elements in the row.
After channel pruning, rows are deleted without a few important rows, meaning that a few important rows are in the matrix, whereas the other rows are unimportant.
Similarly, there are few large values in the normal distribution, while the others are smaller.
We use the normal distribution to approximate the distribution of the importance scores for the rows.
For Assumption~\ref{assumption:variance}, we make it empirically because we observed that importance scores for the elements in a row are usually all large when the row is important.
The unimportant and important elements are naturally separated after sorting rows depending on the assumptions.
Finally, the important elements are preserved after tile pruning because the important elements are in the same tiles.
In other words, we reduce the loss of tile pruning.
\begin{figure*}[t]
\centering
\begin{subfigure}{.32\textwidth}
  \centering
  \includegraphics[width=\linewidth]{fig/exp/alexnet_acc_256_1.pdf}
  \caption{}
  \label{fig:alex_acc_256_1}
\end{subfigure}%
\begin{subfigure}{.32\textwidth}
  \centering
  \includegraphics[width=\linewidth]{fig/exp/alexnet_acc_16_16.pdf}
  \caption{}
  \label{fig:alex_acc_16_16}
\end{subfigure}
\begin{subfigure}{.32\textwidth}
  \centering
  \includegraphics[width=\linewidth]{fig/exp/alexnet_32_32.pdf}
  \caption{}
  \label{fig:alex_acc_32_32}
\end{subfigure}

\vskip\baselineskip

\begin{subfigure}{.32\textwidth}
    \centering
    \includegraphics[width=\linewidth]{fig/exp/resnet_acc_256_1.pdf}
    \caption{}
    \label{fig:resnet_acc_256_1}
\end{subfigure}%
\begin{subfigure}{.32\textwidth}
    \centering
    \includegraphics[width=\linewidth]{fig/exp/resnet_acc_16_16.pdf}
    \caption{}
    \label{fig:resnet_acc_16_16}
\end{subfigure}
\begin{subfigure}{.32\textwidth}
    \centering
    \includegraphics[width=\linewidth]{fig/exp/resnet_32_32.pdf}
    \caption{}
    \label{fig:resnet_acc_32_32}
\end{subfigure}
\caption{Accuracy of the pruned AlexNet (a, b, and c) and ResNet-34 (d, e, and f) that are pre-trained on the ImageNet for different sparsity. The tile shape in Figs.~(a) and (d) are $256\times 1$, that in Figs.~(b) and (e) are $16\times 16$, and that in Figs.~(c) and (f) are $32 \times 32$. Specially, we define the result for unstructured tile pruning, i.e., the $1 \times 1$ tile pruning, as the baseline for tile pruning. Besides, the legend with ``trans'' denotes the result for the model reparameterized by TileTrans.}
\label{fig:model_acc}
\end{figure*}

\section{Experiment}
\label{sec:experiment}

In the experiments, we evaluated the proposed TileTrans on the following metrics:
\begin{itemize}
    \item \textbf{Loss}. Pruning loss is our direct optimization objective. Therefore, we first show the pruning loss before and after the reparameterization of TileTrans. At the same time, we calculate the difference between tile pruning and unstructured pruning for the following two reasons. First, the difference in the pruning loss before and after TileTrans is negligible; therefore, we illustrate the normalized difference in figures for better observation. Second, we designed an optimization objective for TileTrans as Eq.~(\ref{eq:recon_optimazation}), which is the difference between tile pruning and unstructured pruning; thus, we calculate this difference to verify the heuristic permutation does reduce the pruning loss.
    \item \textbf{Accuracy}. The accuracy of the model is the most intuitive metric that evaluates the performance of the model. We are unaware of the performance of the pruned models only by evaluating pruning loss. Besides, the difference between the pruning loss with different tile shapes on the same model is negligible, so we cannot assess how tile shape affects the performance of TileTrans only by loss. Therefore, we calculate the accuracy of the pruned models to evaluate TileTrans further.
\end{itemize}

We design sets of controlled experiments on AlexNet and ResNet-34 to evaluate the performance of TileTrans. For one set, we first reparameterized the model by TileTrans and then pruned the model; for another set, we pruned the model directly. For simplicity, we choose the L1 norm as the criteria of importance score. Finally, we calculated the pruning loss and model accuracy of the pruned models.

Besides, we evaluate TileTrans on tile pruning of different tile shapes to assess how tile shape affects the performance of TileTrans. Expressly, we set the tile shape as $256 \times 1$, $16 \times 16$, or $32 \times 32$. For $256 \times 1$ and $16 \times 16$, the sizes are the same, while the aspect ratios are different; for $16 \times 16$ and $32 \times 32$, the aspect ratios are the same, while the sizes are different. Therefore, we know how the size and aspect ratio of tile affect the performance of TileTrans by comparing the result of these tile shapes. 

We realized TileTran with PyTorch and retrained the models on eight GeForce GTX 1080 GPUs. Our code is available on \url{https://github.com/LoonLi/TileTrans}.

\subsection{Loss}

The results of pruning loss are shown in Fig.~\ref{fig:model_loss}. Specially, we also illustrate the result of unstructured pruning, i.e., the $1 \times 1$ pruning, in Fig.~\ref{fig:model_loss} as the baseline of tile pruning. Accordingly, with specific sparsity, the difference between tile pruning and unstructured pruning is the distance between the point of tile pruning and unstructured pruning.

TileTrans reduced most pruning loss at the sparsity of about 60; however, it reduced little loss when the sparsity was sufficiently small or large. This is due to the minority tiles with all unimportant or important elements, which exist naturally in weight matrices. These tiles with all unimportant elements are deleted when the sparsity is small. In contrast, those with all important elements are still preserved when the sparsity is large. Consequently, TileTrans avoids degrading the pruning loss when the sparsity is sufficiently small or large. Meanwhile, those tiles with important and unimportant elements are permuted by TileTrans; thus, the pruning loss is reduced when the sparsity is moderate.

TileTrans reduced less pruning loss on ResNet-34 than on AlexNet. This is attributed to the following reasons: more complex architecture and smaller weight matrices. First, the architecture of ResNet-34 is more complex than that of AlexNet because there are multiple connections of residual in ResNet-34. According to Theorem 1, layers with residuals must share the transformation with other layers that send the residual to them. In contrast, AlexNet is a simple architecture in which no layer needs to share the same transformation. Thus, searching for good transformations for ResNet-34 is more difficult. Second, the weight matrices of ResNet-34 are small for the permutation. Specifically, the heights of most weight matrices in ResNet-34 are less than 512. Meanwhile, there are large weight matrices with a height of 4096 in AlexNet. In other words, we have fewer choices to permute the weight matrices in rows on ResNet-34 than on AlexNet.

Pruning loss is the most direct metric to evaluate TileTrans, but it is not intuitive enough for understanding the performance of the models. Specifically, we are unaware of the difference between different tile shapes since their results are very similar in the same model. Therefore, we also evaluate TileTrans on the metric of the model accuracy.

\subsection{Accuracy}

We define accuracy as the probability that models correctly infer the types of images in the test data. The accuracy $P_a$ is calculated as $P_a = N_c / N_t$, where $N_t$ is the amount of test data $N_t$ and $N_c$ is the amount of correctly inferred images.

TileTrans improved the accuracies of both AlexNet and ResNet-34 after tile pruning, which is a natural result because TileTrans reduced the pruning loss. We show results of model accuracy in Fig.~\ref{fig:model_acc}. Specifically, the accuracies of both AlexNet and ResNet-34 were improved most under the tile shape of $32 \times 32$. In detail, the pruned AlexNet was improved by up to 17\% at the sparsity of 60\%; the pruned ResNet-34 was improved by up to 5\% at the sparsity of 70\%. Specially, the accuracy of the pruned AlexNet decreased after TileTrans at the sparsity of 80\% under the tile shape of $16 \times 16$. However, in this case, the pruning loss was reduced according to Fig.~\ref{fig:alex_loss_16_16}. Consequently, this decrease is not caused by increased pruning loss and may be improved by changing the criteria of importance score. 

Between the tile shapes of $256 \times 1$ and $16 \times 16$, TileTrans improved the accuracy more for $256 \times 1$ on AlexNet while for $16 \times 16$ on ResNet-34. According to this result, we draw two conclusions about the performance of TileTrans. On the one hand, TileTrans improve more accuracy for strip-shaped tile when the weight matrix is large enough. On the other hand, TileTrans improves more accuracy for square-shaped tiles when the weight matrix is too small. ResNet-34 consists of multiple convolutional layers with heights smaller than $512$. As a result, TileTrans reduced little pruning loss on ResNet-34 for $256 \times 1$ because the permutation cannot exchange the elements among tiles. Meanwhile, AlexNet contains two large linear layers whose weight matrices are $4096 \times 4096$. Therefore, the important and unimportant elements were combined in different tiles on AlexNet for $256 \times 1$. In this case, the unimportant elements are deleted more precisely for $256 \times 1$ than $16 \times 16$.

Between the tile shapes of $16 \times 16$ and $32 \times 32$, TileTrans improved the accuracy more for $32 \times 32$ on both models. Consequently, we conclude that TileTrans improves accuracy more for larger tile sizes. The probability of a tile in the weight matrix filled with important elements is low when the tile size is large. Meanwhile, TileTrans builds weight matrices with more tiles that are filled with important elements. In other words, TileTrans reduces pruning loss more when the tile size is large.

In summary, we conclude three rules for TileTrans as follows:
\begin{enumerate}
    \item TileTrans reduces more loss for strip-shaped than square-shaped tiles when the sizes of tiles are the same, and the weight matrix is large enough for permutation.
    \item TileTrans reduces more loss for square-shaped than strip-shaped tiles when the sizes of tiles are the same, and the weight matrix is small.
    \item TileTrans reduces more loss for the tiles with large size.
\end{enumerate}

\section{Conclusion}
\label{sec:conclusion}

In this study, we proposed a one-shot reparameterization method, called TileTrans, to reduce the loss of tile pruning on DNNs by permuting the weight elements.
We heuristically build an appropriate permutation to avoid time-consuming searching for the optimal permutation.
Besides, TileTrans requires no retraining and keeps the model architecture unchanged. 
The experiments demonstrated that TileTrans reduced the loss of tile pruning on AlexNet and ResNet-34.

\section*{Acknowledgments}
This research was in part supported by ``Program for Leading Graduate
Schools'' of the Ministry of Education, Culture, Sports, Science and
Technology, Japan and the Japan Society for the Promotion of Science
KAKENHI grant number 20K21794.


\end{document}